\newtheorem{lemma}{Lemma}
\begin{document}

\title{Certifiable Alignment of GNSS and Local Frames \\ via Lagrangian Duality}

\author{Baoshan Song, Matthew Giamou, Penggao Yan, Chunxi Xia, Li-Ta Hsu,~\IEEEmembership{Senior member,~IEEE,}
\thanks{This paper was produced by the IEEE Publication Technology Group. They are in Piscataway, NJ.}
\thanks{Manuscript received April 19, 2021; revised August 16, 2021.}}

\author{Baoshan Song$^{1}$, Matthew Giamou$^{2}$, Penggao Yan$^{1}$, Chunxi Xia$^{3}$, and Li-Ta Hsu$^{1}$%
\thanks{Manuscript received: December, 23, 2025; Revised March, 6, 2026; Accepted April, 27, 2026.}
\thanks{This paper was recommended for publication by Editor Editor Ayoung Kim upon evaluation of the Associate Editor and Reviewers’ comments.
This work was supported in part by the Research Grants Council of the Hong Kong Special Administrative Region, China, under the Collaborative Research Fund, for the project ``Heterogeneity-aware Collaborative Edge AI Acceleration (AAE)'' (Funding Body Ref. No. C5032-23G; Project ID: P0051348).} 
\thanks{$^{1}$Baoshan Song, Penggao Yan and Li-Ta Hsu are with the Department of Aeronautical and Aviation
Engineering, The Hong Kong Polytechnic University, China.
        {\tt\footnotesize baoshan.song@connect.polyu.hk, \{penggao.yan,lt.hsu\}@polyu.edu.hk}}%
\thanks{$^{2} $Matthew Giamou is with Department of Computing and Software, McMaster University, Canada.
        {\tt\footnotesize giamoum@mcmaster.ca}}%
\thanks{$^{3} $Chunxi Xia is with the School of Geodesy
and Geomatics, Wuhan University, China.
        {\tt\footnotesize xiachunxi@whu.edu.cn}}%
\thanks{Digital Object Identifier (DOI): see top of this page.}
}


\markboth{IEEE Robotics and Automation Letters. Preprint Version. Accepted APRIL, 2026}
{Song \MakeLowercase{\textit{et al.}}: Certifiable Alignment of GNSS and Local Frames via Lagrangian Duality} 


\maketitle

\begin{abstract}
Estimating the absolute orientation of a local system relative to a global navigation satellite system (GNSS) reference often suffers from local minima and high dependency on satellite availability.  Existing methods for this alignment task rely on abundant satellites unavailable in GNSS-degraded environments, or use local optimization methods which cannot guarantee the optimality of a solution. This work proposed a certifiable method, meaning it can numerically verify the optimality of the result, filling a gap where existing local optimizers fail. 
 We first formulate the original Doppler-based frame alignment problem as a nonconvex quadratically constrained quadratic program (QCQP) problem and relax the QCQP problem to a concave Lagrangian dual problem that provides a lower cost bound for the original problem. Then we perform relaxation tightness and observability analysis to derive criteria for certifiable optimality of the solution. Finally, simulation and real world experiments are conducted to evaluate the proposed method. The experiments show that our method provides certifiably optimal solutions even with only 2 satellites with Doppler measurements and 2D vehicle motion, while the traditional velocity-based VOBA method and the advanced GVINS alignment technique may fail or converge to local optima without notice. To support the development of GNSS-based navigation techniques in robotics, all the code and data are open-sourced at \url{https://github.com/Baoshan-Song/Certifiable-Doppler-alignment}. 
\end{abstract}

\begin{IEEEkeywords}
GNSS, frame alignment, navigation, sensor integration, convex relaxation.
\end{IEEEkeywords}

\section{INTRODUCTION}
\IEEEPARstart{L}{ocalization} is an essential capability for any autonomous mobile system\cite{groves_principles_2013}. Nevertheless, many dead-reckoning (DR) based systems, such as visual-inertial navigation systems (VINS)\cite{cao_gvins_2022} and lidar-inertial odometry (LIO)\cite{li_p3-lins_2023}, operate by integrating incremental local measurements.
From the perspective of consistency analysis, these DR-based systems inherently possess four unobservable degrees of freedom (DoF) in the absence of external absolute references: the 3D global position and the rotation around the gravity vector (i.e., the yaw or heading angle)\cite{zhou_mast3r-fusion_2025}. Without periodic absolute corrections, the errors in these unobservable dimensions accumulate indefinitely, leading to drift and significant system inconsistency.
To mitigate this, incorporating Global Navigation Satellite System (GNSS) measurements is a well-established strategy to anchor these unobservable manifolds to a globally consistent frame. However, to effectively fuse GNSS with local DR systems, the autonomous platform must first solve an initial frame alignment problem.

Since GNSS can easily provide globally consistent position \cite{wen_robust_2021}, the major problem is to estimate the rotation alignment parameters. Thus we refer to the initial rotation alignment between GNSS and local frames as \emph{initial alignment}. In this work, we focus on this ``cold-start'' initialization problem, where no prior orientation is available. Since rotations in three dimensions are described by the nonconvex special orthogonal group ($\mathrm{SO(3)}$), the principal scientific challenge of the initial alignment problem is avoiding local optima which lead to poor alignment performance. To mitigate the effect of nonconvexity, there are plenty of existing works including traditional methods using motion synchronization and advanced methods using raw GNSS measuring models in the following sections. 

\begin{figure}
    \centering
    \includegraphics[width=1\linewidth]{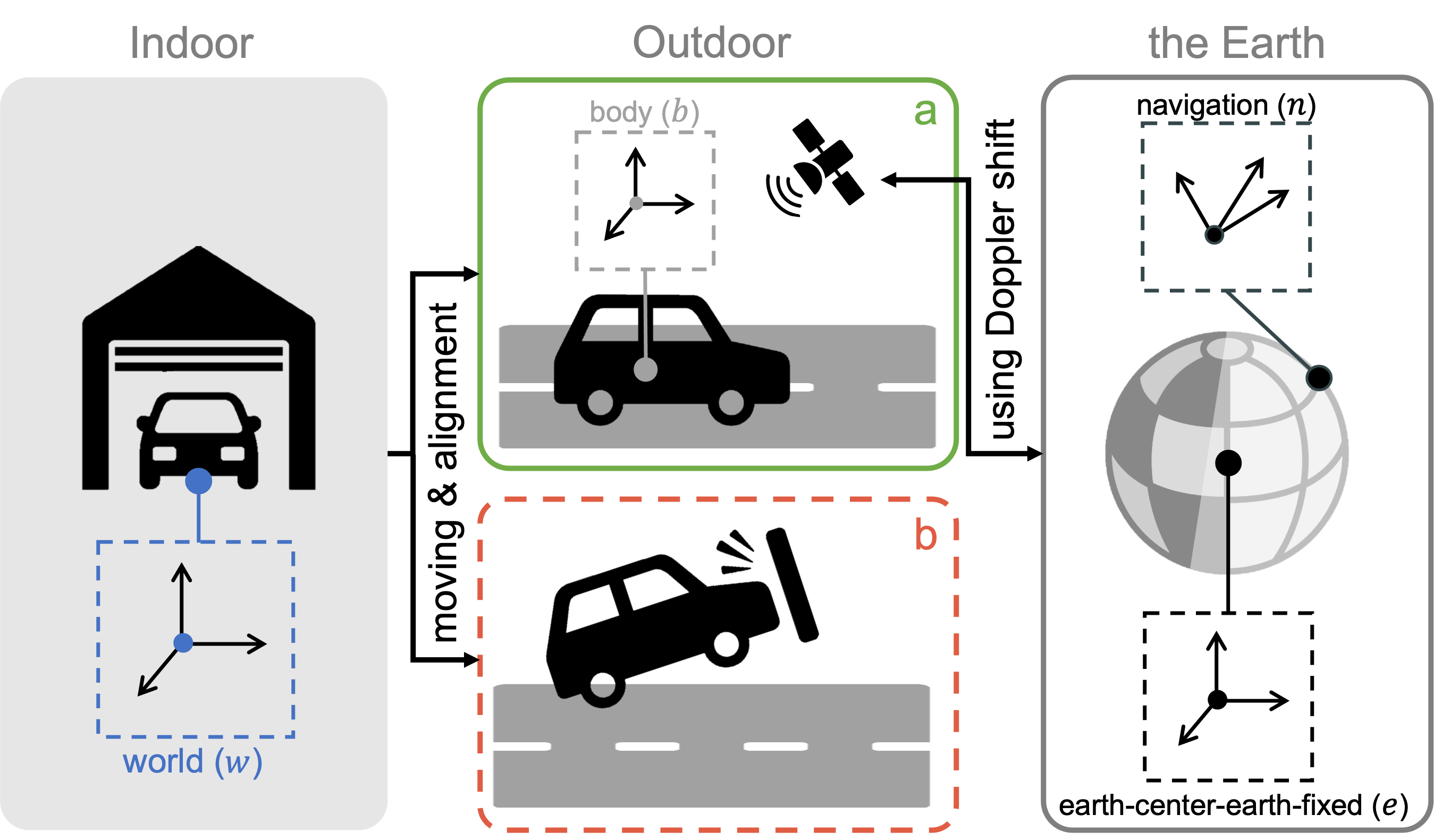}
    \caption{
    Illustration of the coordinate alignment between the local $w$-frame and global $e/n$-frames. The key insight is to align the local velocity with GNSS Doppler measurements to resolve global orientation. Definitions: $w$ is the local world frame fixed at the start point; $b$ is the moving body frame; $n$ is the navigation frame; and $e$ is the earth-center-earth-fixed (ECEF) frame. (a) Successful alignment: Leveraging the certifiable global optimum ensures accurate positioning for safe autonomous driving. (b) Alignment failure: Misalignment leading to drift or hazardous lane departures.}
    \label{fig:demo}
\end{figure}

\subsection{Traditional alignment methods: closed-form solution}
GNSS-aided local frame alignment has been traditionally approached as a motion synchronization problem, i.e., a Wahba problem\cite{wahba_1965}, using motion states such as position\cite{chen_rapid_2023}, velocity\cite{zhang_vboba_2020} and acceleration\cite{wu_optimization_2011} estimated from a GNSS receiver and other sensors. For instance, optimization-based alignment (OBA) is a term used for systems performing GNSS/Inertial Measurement Units (IMU) integrated alignment\cite{wu_optimization_2011}. Since there are many mature solutions for motion synchronization, these initial alignment methods can be addressed by solving registration problems (e.g., Wahba's problem) using computational methods such as Singular Value Decomposition (SVD)\cite{markley_attitude_1993}. For example, the position-based synchronization can be solved by Arun's method to achieve both translation and rotation alignment\cite{arun_least-squares_1987}. Formulations using velocity measurements can also be solved by existing methods, but only provide rotation information\cite{umeyama_least-squares_2002}. 
In short, motion synchronization-based methods can be solved to a closed-form optimal solution. However, these methods require abundant ($\ge4$) GNSS satellites to provide motion states as input, which is challenging in GNSS-degraded environments \cite{groves_principles_2013}. While simple and optimal, the effectiveness of these methods diminishes under sparse or degraded GNSS conditions, leading to the first research gap.

\subsection{Advanced alignment methods: raw GNSS measurement based methods}
To overcome the reliance on rich satellite geometry, recent methods have shifted toward frame alignment using raw GNSS measurements. These include carrier-phase and pseudorange-based positioning, as well as signal shifting measurements such as time-differenced carrier phase (TDCP)\cite{zhang_carrier-phase_2022} and Doppler-based\cite{wei_carrier_2021} velocity estimation. These methods enable alignment even in low-satellite-visibility environments and shorter time spans, by leveraging the geometric consistency between local motion estimates and GNSS observations. However, to the best of our knowledge, there are not closed-form solutions for the alignment problem using raw GNSS measurements, and existing raw GNSS-based methods using local search may converge to local optima when given poor initial guesses. Specifically, these methods offer no optimality guarantees, meaning their convergence is highly sensitive to the quality of the initial guess. Without a certifiable framework, it is impossible to distinguish between a globally optimal alignment and a suboptimal local attractor based on the measurement residuals alone. This leads to the second research gap. 

Except from general methods, the extensive body of research on inertial navigation systems (INS) has led to a large number of frame alignment methods designed particularly for inertial sensors. Since a low-cost IMU can provide roll and pitch angles during quasi-static motion by sensing the local direction of the Earth's gravitational field, these alignment methods mainly focus on heading alignment\cite{cao_gvins_2022, li_p3-lins_2023}. 
With determined horizontal angles, the rotation part of the feasible region shrinks from $\mathrm{SO(3)}$ to $\mathrm{SO(2)}$. This reduction significantly eases the difficulty of local searching, as $\mathrm{SO(2)}$ is a minimal degree variety where any non-negative quadratic objective is a sum-of-squares\cite{brynte_tightness_2022}. This geometric property implies a more tractable optimization landscape that allows local search methods to converge to the global optimum with much higher probability.


\subsection{Our porposed approach: convex relaxation with Lagrangian duality}
        In this work, we aim to bridge these research gaps by developing a certifiable alignment method using raw GNSS measurements that remains robust under partial observability and provides optimality guarantees. To achieve this objective, it is contributive to introduce the growth of convex relaxation. 

If a nonconvex problem can be relaxed into a convex problem, it is possible to efficiently find a global optimum. The theory of convex optimization has been studied for decades but limited by computing resources. In recent years, with the rapid evolution of computing devices, advanced optimization methods are leveraged for large scale optimization problems and relaxed problems can be solve efficiently\cite{boyd_convex_2004}. As a result, in the last decade, convex relaxation has been applied to more robotics problems, such as pose graph optimization (PGO)\cite{rosen_se-sync_2019}, indoor positioning\cite{deng_doppler_2018}, geometric registration\cite{yang_teaser_2020}, hand-eye calibration\cite{giamou_certifiably_2019}, rotation synchronization\cite{liu_resync_2023}, wireless localization\cite{yan_robust_2022}, camera arrangement\cite{kaveti_oasis_2024}, orbit determination\cite{song_certifiably_2025b}, and opportunistic positioning\cite{song_certifiably_2025a}. 

This work proposes a certifiable GNSS/local initial frame alignment method using raw Doppler shift measurements via semidefinite programming (SDP). First, we use the GNSS Doppler shift model for alignment and construct a nonconvex QCQP problem. This nonconvex problem is solved using a convex Lagrangian dual relaxation, which we prove is guaranteed to provide a globally optimal solution under explicit observability conditions. 
To the best of our knowledge, this is the first certifiably optimal GNSS/local frame alignment method using raw GNSS Doppler shift measurements. 
The contributions are summarized below:
\begin{itemize}
    \item \textbf{Certifiable GNSS-aided alignment method}: a novel estimation method for Doppler-aided GNSS/local frame alignment using convex relaxation. This method provides a solution with an optimality guarantee. 
    \item \textbf{Tightness and observability analysis }: relaxation tightness analysis and observability analysis are performed to cope with noisy and degraded measurements in GNSS-degraded environments, such as urban canyons. We find the necessary conditions to guarantee observability and sufficient-and-necessary conditions to guarantee optimality. 
    \item \textbf{Experimental evaluation}: both simulation and real world tests are performed to evaluate the proposed method. The results show that our method outperforms the traditional velocity synchronization based method VOBA \cite{zhang_vboba_2020} and the advanced alignment method used in GVINS \cite{cao_gvins_2022}. We also open-source our code and data for free use by the robotics and autonomous navigation communities.
\end{itemize}

The remainder of this paper is organized according to the pipeline shown in Fig. \ref{fig:pipeline}. Section \ref{sec:problem_formu} introduces the original Doppler-based GNSS/local frame alignment model and construct an equivalent QCQP problem. Section \ref{sec:alignment} employs Lagrangian duality to solve the QCQP problem and derive the necessary conditions for optimality and observability. Section \ref{sec:experiment} evaluates the performance of the proposed method via simulation and real data. Finally, Section \ref{sec:conclusion} discusses our conclusions and future work.

\begin{figure}
    \centering
    \includegraphics[width=1\linewidth]{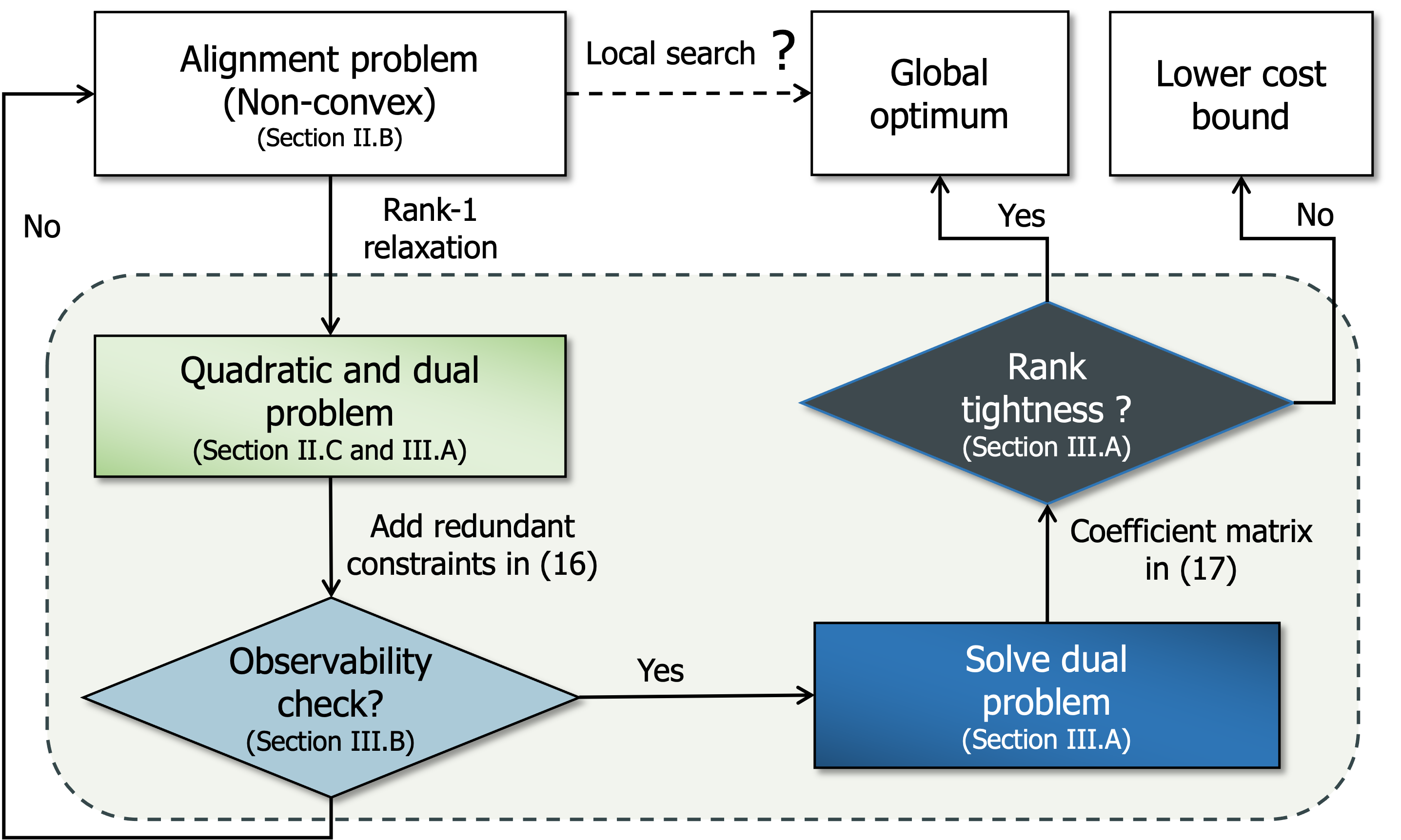}
    \caption{Pipeline of certifiable alignment}
    \label{fig:pipeline}
\end{figure}
\section{Problem Formulation}
\label{sec:problem_formu}
\subsection{Notation}
Vectors and matrices are denoted by lowercase and uppercase letters with boldface (e.g. $\mathbf{a}$ and $\mathbf{A}$). A diagonal matrix with diagonal elements $a_1,...,a_n$ is denoted by $\text{diag}\{a_1,...,a_n\}$. The $i$-th to $j$-th components of $\mathbf{a}$ is denoted by  $\mathbf{a}_{(i:j)}$, and the submatrix of $\mathbf{A}$ with corners $(i,i),(i,j),(j,i),(j,j)$ is denoted by $\mathbf{A}_{(i:j)}$. A $n\times n$ identity matrix is denoted by $\mathbf{I}_n$. The rank and trace of square matrix $\mathbf{A}$ are denoted by $\text{rank}(\mathbf{A})$ and $\text{tr}(\mathbf{A})$ respectively. An $n$-dimension real space is denoted by  $\mathbb{R}^n$ and an $n\times n$ symmetric matrix space is denoted by $\mathbb{S}^n$. $\mathbf{A}\succeq 0$ indicates that $\mathbf{A}$ is positive semidefinite and $\mathbf{A}\succ 0$ indicates that $\mathbf{A}$ is positive definite. $\left \| \mathbf{A}\right \|$ and $\left \| \mathbf{a}\right \|$ denote the 2-norm of a matrix $\mathbf{A}$ and a vector $\mathbf{a}$, respectively. The Frobenius inner product of $\mathbf{A}$ and $\mathbf{B}$ is $\mathbf{A}\bullet\mathbf{B}$. $\otimes$ and $\odot$ denote kronecker and hadamard products, respectively. 


\subsection{Original problem}
Our problem is close to the coarse GNSS initial alignment method in GVINS \cite{cao_gvins_2022}. In GVINS, the yaw offset between the ENU frame and the local world frame is estimated using Doppler measurements. To generalize this method, we extend the Doppler-based rotation alignment between ECEF and local world frame to ${\mathrm{SO(3)}}$ more than yaw offset. According to\cite{guo_instantaneous_2023}, a typical raw GNSS Doppler measurement model can be written as:
\begin{equation}
\begin{array}{cc}
         \lambda \cdot D= (\mathbf{p}_r^e-\mathbf{p}_s^e)^T(\mathbf{v}_r^e-\mathbf{v}_s^e)/\left\|\mathbf{p}_r^e-\mathbf{p}_s^e\right\|_2    + c\cdot d\dot{t}_r +\epsilon, 
\end{array}
\label{equ:doppler_true}
\end{equation}
where $\lambda$ is the wavelength of the GNSS carrier signal ($\mathrm{m}$); $D$ is a Doppler shift measurement ($\mathrm{Hz}$); $\mathbf{p}_r^s$ and $\mathbf{p}_s^e$ are the position of the receiver and the satellite ($\mathrm{m}$);  $\mathbf{v}_r^e$ and $\mathbf{v}_s^e$ are the velocity of the receiver and the satellite ($\mathrm{m/s}$); $c$ is the speed of light ($\mathrm{m/s}$); $d\dot{t_r}$ is the clock drift of the receiver; and $\epsilon$ is additive Gaussian white noise ($\mathrm{m/s}$).  Geometrically, a Doppler shift measurement is the projection of relative velocity on the line of sight vector between a satellite and a receiver \cite{song_certifiably_2025a}. Since a Doppler measurement is generally less noisy than a pseudorange measurement, it can be used to determine the rotation between the ECEF frame and the local world frame $\mathbf{R}^e_w$ using local receiver velocity $\mathbf{v}^w_r$ in the $w$-frame: 
\begin{gather}
    \mathbf{v}_r^e = \mathbf{R}^e_w \mathbf{v}_r^w.
\end{gather}

Also, we simplify the notation of the Doppler shift model with
\begin{gather}
        \mathbf{n} \triangleq (\mathbf{p}_r^e-\mathbf{p}_s^e) / \|\mathbf{p}_r^e-\mathbf{p}_s^e\|_2, \\
    \bar{D} \triangleq \lambda \cdot D - \mathbf{n}^T\mathbf{v}^e_s, \quad \bar{t} \triangleq c \cdot d\dot{t}_r.
\end{gather}
Given a bundle of input Doppler shift measurements, satellite positions and velocities from ephemeris, and local velocities at the same time, we can get the Doppler-based GNSS/local frame alignment problem:
\begin{equation}
\begin{aligned} \label{equ:original}
    &\min_{\mathbf{R}^e_w\in \mathrm{SO(3)},\bar{t}\in\mathbb{R}}\sum_{i=1}^K||z_i||^2_\epsilon
    \\     &z_i=(\mathbf{R}^e_w\mathbf{v}^w_{r,i}-\mathbf{v}^e_{s,i})^T\mathbf{n}_i+\bar{t}-\bar{D}_{i}.
\end{aligned}
\end{equation}
Problem (\ref{equ:original}) is nonconvex since the $\mathrm{SO(3)}$ is a nonconvex set. 
In general, existing local search methods can efficiently find solutions to problems of this form.
However, as we will see in our experiments in Section \ref{sec:experiment}, these methods do not have global optimality guarantees and may converge to local minima.

\subsection{Quadratic formulation}
To mitigate the shortcomings of local search methods, we can reformulate the cost and constraints of Problem (\ref{equ:original}) as quadratic forms and employ Lagrangian duality to obtain a convex problem which yields a certifiably globally optimal solution.
For the cost function, we can rewrite the original alignment problem with this identity:
\begin{equation}
    (\mathbf{R}^e_w\mathbf{v}^w_{r})^T\mathbf{n}=(\textbf{v}^w_r\otimes\textbf{n} )^T \text{vec}(\mathbf{R}^e_w).
\end{equation}
By reorganizing the variable and coefficient matrix as
\begin{gather}
        \mathbf{r}=\text{vec}(\mathbf{R}^w_e), \quad
            \mathbf{m}=\textbf{v}^w_r\otimes\textbf{n}, 
\end{gather}
we can write the cost with a homogeneous variable $\bar{\mathbf{r}}=\begin{bmatrix}
    \mathbf{r}^T & 1
\end{bmatrix}^T$ as
\begin{equation}
\label{equ:objective_combined}
\begin{aligned}
    &\sum_{i=1}^K \|z_i\|^2_\epsilon = \begin{bmatrix} t \\ \mathbf{\bar{r}} \end{bmatrix}^T 
    \begin{bmatrix} \mathbf{Q}_{t,t} & \mathbf{Q}_{t,\mathbf{\bar{r}}} \\ \mathbf{Q}_{\mathbf{\bar{r}},t} & \mathbf{Q}_{\mathbf{\bar{r}},\mathbf{\bar{r}}} \end{bmatrix}
    \begin{bmatrix} t \\ \mathbf{\bar{r}} \end{bmatrix}, \\
     &\mathbf{Q}_{t,t} = K, \quad \mathbf{Q}_{t, \bar{\mathbf{r}}} = \sum^K_{i=1} \begin{bmatrix} \mathbf{m}_i^T & -\bar{D}_i \end{bmatrix}, \\
    &\mathbf{Q}_{\bar{\mathbf{r}},\bar{\mathbf{r}}} = \sum^K_{i=1} \begin{bmatrix} \mathbf{m}_i\mathbf{m}_i^T & -\bar{D}_i\mathbf{m}_i \\ -\bar{D}_i\mathbf{m}_i^T & \bar{D}_i^2 \end{bmatrix}.
\end{aligned}
\end{equation}




To reduce the computational complexity of our method, we can marginalize the clock drift rate $t$ using Schur's complement: the marginalized cost function is

\begin{equation}
\begin{aligned}
&\sum_{i=1}^K||\bar{z}_i||^2_\epsilon =\bar{\mathbf{r}}^T\bar{\mathbf{Q}}\bar{\mathbf{r}},\\
&\bar{\mathbf{Q}}=\mathbf{Q}_{\mathbf{\bar{r}},\mathbf{\bar{r}}}-\mathbf{Q}_{t,\mathbf{\bar{r}}} \mathbf{Q}_{t,t} ^{-1}\mathbf{Q}_{\mathbf{\bar{r}},t}.
\end{aligned}
\end{equation}
The constraint $\mathbf{R}^e_w\in \mathrm{SO(3)}$ is equivalent to \cite{briales_convex_2017} 
\begin{equation}
\mathrm{SO(3)}\equiv\{\mathbf{R}\in \mathbb{R}^{3\times3}:\mathbf{R}^T\mathbf{R}=\mathbf{I}_3,\det (\mathbf{R})=1\}.
\end{equation}
Here the orthonormal constraint is quadratic but the determinant constraint is cubic. According to \cite{briales_convex_2017}, we can use the quadratic handedness constraints under the right-hand rule to maintain the determinant constraint. The function of these constraints is explained in Section \ref{sec:alignment}. We can therefore define the homogeneous variable $\mathbf{x}=\begin{bmatrix}    \mathbf{r}^T & y, \end{bmatrix}^T$ and derive a primal QCQP problem equivalent to the original Problem (\ref{equ:original}):
\begin{equation}
\label{equ:qcqp}
\begin{aligned}
\min_{\mathbf{x}} \quad & \mathbf{x}^T \bar{\mathbf{Q}} \mathbf{x} \\
\text{s.t.} \quad & \mathbf{R}^T \mathbf{R} = y^2 \mathbf{I}_3, \\
& \mathbf{R} \mathbf{R}^T = y^2 \mathbf{I}_3, \\
& \mathbf{R}^{(i)} \times \mathbf{R}^{(j)} = y \mathbf{R}^{(k)}, \\
& y^2 = 1, \\
& \text{with } i,j,k = \text{cyclic}(1,2,3).
\end{aligned}
\end{equation}

\section{Certifiable alignment}
\label{sec:alignment}
\subsection{Lagrangian duality}

To solve the primal QCQP problem, we employ Lagrangian duality to relax the QCQP to a dual SDP problem, which is concave with a unique optimum:
\begin{equation}
\label{equ:dual}
    \max_{\lambda,\gamma,M,N} \gamma
\end{equation}
\[  
s.t. \ \bar{\mathbf{Q}}+\mathbf{P}(\lambda,M,N,\gamma)\succeq \mathbf{0},
\]
where $\lambda, M,N$ and $\gamma$ are Lagrangian multipliers. For details of $\mathbf{P}(\lambda,M,N,\gamma)$, readers can refer to\cite{briales_convex_2017}. 

According to weak duality theory, the dual problem provides a lower bound for the cost of the original MAP inference in \eqref{equ:original} and its equivalent QCQP in \eqref{equ:qcqp}. Under constraint qualifications such as {Slater's condition}\cite{boyd_convex_2004}, {strong duality} holds, allowing the dual objective to reach the exact global minimum of the primal problem. This relationship is fundamentally governed by the {Karush–Kuhn–Tucker (KKT) conditions}, which serve as the necessary (and also sufficient under Slater's condition) requirements for strong duality\cite{cifuentes_local_2022}:

(i) all constraints in (\ref{equ:qcqp});

(ii) $\mathbf{H} \succeq 0$;

(iii) $\mathbf{H}\mathbf{x} = 0$
where $\mathbf{H}=\bar{\mathbf{Q}}+\mathbf{P}(\lambda,M,N,\gamma)$ is the coefficient matrix of the constraints in (\ref{equ:dual}).

However, in robot perception, reaching the global minimum (\textit{cost-tightness}) does not automatically guarantee the recovery of a unique state (\textit{rank-tightness}). To recover a definitive robot state, rank-tightness is further required (i.e., $\text{corank}(\mathbf{H}^*) = 1$). It has been proven that redundant constraints are contributive to achieving both cost and rank-tightness\cite{dumbgen_globally_2024}. Following the geometric framework in\cite{karzan_exactness_2021}, these constraints expand the dual cone to contract its polar cone so that it could tightly recover the convex hull of the original non-convex problem.

Also, the tightness dual relaxation is affected by some factors, e.g. noises and observability. 
The effect of noise has been proved in\cite{cifuentes_local_2022} and evaluated in many works, including\cite{giamou_certifiably_2019,rosen_se-sync_2019}. Except the disturbance from noises, observability is also an important factor. An frame alignment method is required to be accurate and optimal even with limited measurements. In the following, we focus on the connection between dual relaxation and observability.

\subsection{Dual relaxation tightness and observability} \label{sec:duality_gap}
Although GNSS provides consistent measurement, its signal could degenerate in blocked environments including urban canyons and forests. Therefore, it is necessary to analyze the rank-tightness under poor observability conditions. 
Inspired by some robotic researches, such as\cite{lv_observability-aware_2022,giamou_certifiably_2019}, it is important to analyze the observability to guarantee the existence of a unique solution. Existing works have shown the benefit of redundant constraints on strong duality\cite{dumbgen_globally_2024,briales_convex_2017}. However, this is based on that observability is fulfilled. 
Therefore, we have to guarantee the observability to be the basis of the relaxation tightness. Here we analyze the necessary conditions for observability and discuss about the minimum requirements for satellite visibility and motion of the carrier.

\begin{lemma}[Necessary conditions of observability with redundant constraints] \label{lemma:obs_abundant}
The unknown variables in problem (\ref{equ:dual}) with redundant constraints is observable with the coefficient matrix's degree of freedom (DOF) $\ge4$. To obtain instantaneous alignment, we need at least 2 satellites and velocity along 2 axis.
\end{lemma}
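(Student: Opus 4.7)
The plan is to reduce observability to a rank condition on the measurement Jacobian, count the intrinsic parameters to explain the bound of $4$, and then perform a minimum-geometry argument on the Doppler Jacobian rows. First I would observe that the unknowns of Problem (\ref{equ:original}) are $\mathbf R^e_w\in SO(3)$, carrying $3$ intrinsic DOF, together with the clock drift $\bar t\in\mathbb R$, for a total of $4$ independent scalar parameters. Observability of the noiseless optimum is equivalent to the Jacobian of the Doppler residuals, restricted to the tangent directions of this parameter manifold, having full column rank $4$. Since Problem (\ref{equ:dual}) augments $\bar{\mathbf Q}$ with a multiplier block $\mathbf P(\lambda,M,N,\gamma)$ enforcing the redundant orthogonality and handedness constraints, the KKT Hessian $\mathbf H=\bar{\mathbf Q}+\mathbf P$ already has a 1D nullspace spanned by the true $\mathbf x^\ast$ at the dual optimum; the lemma's ``Hessian DOF'' is then precisely the rank in directions complementary to that nullspace, and must be $\ge 4$.

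Next I would differentiate a single Doppler residual on the $SO(3)$ tangent space. Perturbing $\mathbf R^e_w\to \mathbf R^e_w\exp([\boldsymbol\omega]_\times)$ and $\bar t\to \bar t+\delta t$ in $z_i=(\mathbf R^e_w\mathbf v^w_{r,i}-\mathbf v^e_{s,i})^T\mathbf n_i+\bar t-\bar D_i$ yields
\[
d z_i=\boldsymbol\omega^{T}\bigl(\mathbf v^w_{r,i}\times(\mathbf R^{e\,T}_w\mathbf n_i)\bigr)+\delta t,
\]
so the stacked Jacobian row at measurement $i$ reads $[\,(\mathbf v^w_{r,i}\times\mathbf u_i)^T,\ 1\,]$ with $\mathbf u_i\triangleq\mathbf R^{e\,T}_w\mathbf n_i$ the line-of-sight direction expressed in the $w$-frame. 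For instantaneous alignment $\mathbf v^w_{r,i}$ is shared across satellites in the same epoch, so the rotation block of the row reduces to $\mathbf v^w_r\times\mathbf u_i$.

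The third step is a contrapositive rank count. If only one satellite is visible, all rotation-block rows lie in the plane $\mathbf u_1^{\perp}$, bounding the rotation-block rank by $2$; if the receiver velocity is confined to a single $w$-axis, the rows lie in $\mathbf v^{\perp}$, again bounding the rank by $2$. In either degenerate case the Jacobian rank is at most $3<4$, violating the DOF requirement. Conversely, two non-collinear LOS directions combined with receiver motion along two independent local axes generically span all of $\mathbb R^3$ in the rotation block (by a routine $2\times 2$ argument on the cross products $\mathbf v_a\times\mathbf u_b$), while the constant ``$1$'' column remains linearly independent of that block and accounts for the remaining $\bar t$ DOF. Thus both hypotheses of the lemma are necessary for Hessian DOF $\ge 4$.

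The main obstacle will be passing rigorously from the tangent-space Jacobian rank to the relaxed Hessian DOF of $\mathbf H$, because the redundant orthogonality and handedness constraints can manufacture additional null directions whose multipliers must be accounted for. I would close the argument by exhibiting, for each degenerate geometry above, an explicit second primal vector $\mathbf x'$ satisfying $\mathbf H\mathbf x'=0$, witnessing that the Hessian's effective DOF drops in lockstep with the Jacobian rank and thereby confirming the necessary conditions.
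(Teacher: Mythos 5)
Your argument reaches the same conclusion as the paper but by a genuinely different route. The paper works entirely in the lifted coordinates of the QCQP: it stacks the measurement vectors $\mathbf{m}_i=\mathbf{v}_i\otimes\mathbf{n}_i$ into a matrix $\mathbf{M}$, invokes the Kronecker-type rank bound $\mathrm{rank}(\mathbf{M})\le\mathrm{rank}(\mathbf{V})\cdot\mathrm{rank}(\mathbf{N})$, notes that with redundant constraints only the three Euler-angle degrees of freedom remain to be pinned down (so $\mathrm{rank}(\mathbf{M})\ge 3$ is needed), and concludes $\mathrm{rank}(\mathbf{V})\ge 2$ and $\mathrm{rank}(\mathbf{N})\ge 2$, i.e.\ two satellites and two independent velocity axes. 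You instead work intrinsically on $SO(3)\times\mathbb{R}$, deriving the tangent-space Jacobian row $[\,(\mathbf{v}^w_{r,i}\times\mathbf{u}_i)^T,\ 1\,]$ (your differentiation is correct) and running a contrapositive rank count on the cross products. Your route buys a cleaner geometric picture of exactly which directions become unobservable in each degenerate configuration, and your parameter count of $3+1=4$ actually matches the lemma's stated ``DOF $\ge 4$'' more faithfully than the paper's own proof, which counts only the three Euler angles. The paper's route buys a statement directly in terms of the matrices $\mathbf{V}$ and $\mathbf{N}$ that appear in the SDP data, at the cost of a looser step: the product bound alone does not immediately exclude, say, $\mathrm{rank}(\mathbf{V})=1$ with $\mathrm{rank}(\mathbf{N})=3$, so the paper's conclusion really rests on the same structural observation you make explicitly (all rows confined to $\mathbf{v}^\perp$ or $\mathbf{u}_1^\perp$). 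The obstacle you flag at the end --- rigorously relating the tangent-space Jacobian rank to the corank of the relaxed Hessian $\mathbf{H}$ --- is real, but the paper does not address it either; neither proof is fully airtight on that point, and your proposed remedy of exhibiting an explicit second null vector of $\mathbf{H}$ in each degenerate geometry would, if carried out, make your version the more rigorous of the two.
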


For the observability without redundant constraints, it is more challenging.

\begin{lemma}[Necessary conditions of observability without redundant constraints] \label{lemma:wo_obs_abundant}
The unknown variables in dual problem (\ref{equ:dual}) without redundant constraints is observable with the coefficient matrix's DOF $\ge10$, To obtain instantaneous alignment, we need at least 3 satellites and velocity along 3 axis.
\end{lemma}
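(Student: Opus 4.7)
The plan is to mirror the structure of the proof of Lemma \ref{lemma:obs_abundant}, but to track carefully how the absence of the redundant constraints tightens the rank requirement on the information supplied by measurements. In Lemma \ref{lemma:obs_abundant}, the extra constraint $\mathbf{R}\mathbf{R}^T = y^2 \mathbf{I}_3$ and the handedness relations enlarge the span of the constraint-derived term $\mathbf{P}(\lambda,M,N,\gamma)$ inside $\mathbb{S}^{10}$, so $\bar{\mathbf{Q}}$ only needs to supply enough directions to disambiguate the three rotational degrees of freedom, giving the rank-$3$ condition on $\mathbf{M}$. Without those redundant constraints, $\mathbf{P}$ spans a strictly smaller subspace and the cost matrix $\bar{\mathbf{Q}}$ must carry the remaining structure of the rank-$9$ PSD certificate required by KKT conditions (ii)--(iii).

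First, I would enumerate the image of the constraint Jacobian when only the primary orthogonality $\mathbf{R}^T\mathbf{R} = y^2\mathbf{I}_3$, the cyclic handedness relations, and the homogenization $y^2 = 1$ are imposed. Counting the number of linearly independent symmetric rank-one (or rank-two) contributions that each multiplier places into $\mathbf{H}$, and comparing with the $55$-dimensional ambient $\mathbb{S}^{10}$, gives the exact deficit that $\bar{\mathbf{Q}}$ must fill in order to produce a certificate $\mathbf{H} = \bar{\mathbf{Q}} + \mathbf{P} \succeq 0$ with $\mathbf{H}\mathbf{x} = 0$ and $\mathrm{rank}(\mathbf{H}) = 9$.

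Second, I would translate that deficit into a rank condition on the measurement matrix. Since $\bar{\mathbf{Q}}$ is the Schur complement of the Gram matrix built from $\mathbf{m}_i = \mathbf{v}_{r,i}^w \otimes \mathbf{n}_i$, we have
\begin{equation}
\mathrm{rank}(\bar{\mathbf{Q}}) \le \mathrm{rank}(\mathbf{M}) \le \mathrm{rank}(\mathbf{V}) \cdot \mathrm{rank}(\mathbf{N}),
\end{equation}
where $\mathbf{M} = [\mathbf{m}_1, \ldots, \mathbf{m}_K]$. Without redundant constraints the argument of Lemma \ref{lemma:obs_abundant} sharpens to $\mathrm{rank}(\mathbf{M}) \ge 9$, because $\bar{\mathbf{Q}}$ alone must now resolve the full nine-dimensional rotation block of $\bar{\mathbf{r}}$. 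Using $\mathrm{rank}(\mathbf{V}),\mathrm{rank}(\mathbf{N}) \le 3$ this forces $\mathrm{rank}(\mathbf{V}) = \mathrm{rank}(\mathbf{N}) = 3$, i.e., line-of-sight vectors spanning $\mathbb{R}^3$ (at least three satellites in general position) and velocity excitation along all three body axes.

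Finally, I would translate the $\mathrm{rank}(\mathbf{M}) \ge 9$ condition back into the statement that the Hessian carries at least $10$ independent pieces of information when the homogenization contribution is included, matching the DOF $\ge 10$ claim. The main obstacle is the first step: an exact accounting of which directions in $\mathbb{S}^{10}$ are already covered by the primary orthogonality and handedness multipliers is delicate, because some of those constraint directions overlap with directions that measurements also supply, and one must be careful not to double count. If that accounting turns out to leave only $k < 9$ directions uncovered, the claim would weaken to $\mathrm{rank}(\mathbf{M}) \ge k$, so the tightness of the $9$ (and hence the exact DOF figure $10$) depends critically on showing that the reduced constraint set touches none of the nine rotation-block directions that measurements produce. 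I expect this to be the bottleneck and would verify it either by an explicit basis calculation in a canonical frame or by exhibiting a concrete two-satellite, two-axis-motion counterexample where the dual certificate provably fails, making the necessity sharp.
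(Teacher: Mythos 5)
Your decisive step coincides with the paper's: without redundant constraints the measurements must resolve all nine entries of the rotation block on their own, so $\text{rank}(\mathbf{M})\ge 9$; combined with $\text{rank}(\mathbf{M})\le\text{rank}(\mathbf{V})\cdot\text{rank}(\mathbf{N})$ and the fact that each factor is at most $3$, this forces $\text{rank}(\mathbf{V})=\text{rank}(\mathbf{N})=3$, i.e.\ at least three satellites in general position and velocity excitation along three axes. Where you genuinely diverge is in how the threshold of $9$ is justified. You propose an explicit accounting of which directions of $\mathbb{S}^{10}$ the reduced multiplier term $\mathbf{P}(\lambda,M,N,\gamma)$ can span, so that the deficit $\bar{\mathbf{Q}}$ must fill in a rank-$9$ certificate $\mathbf{H}\succeq 0$ with $\mathbf{H}\mathbf{x}=0$ is computed exactly; the paper does none of this, and simply counts the ten unknowns (nine rotation entries plus the clock drift rate), notes the homogeneous coordinate is known, and asserts $\text{rank}(\mathbf{M})\ge 9$ directly. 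Your route, if completed, would be strictly stronger: it would show the $9$ is tight rather than merely a parameter count, and your suggestion of a concrete two-satellite, two-axis counterexample would establish sharpness, which the published proof does not attempt. You correctly flag the basis calculation in $\mathbb{S}^{10}$ as the bottleneck and leave it open, but since that step is not needed to reproduce the paper's own (informal, necessity-only) argument, your proposal already subsumes the published proof; the unfinished certificate accounting is extra rigor the source never supplies, not a gap relative to it.
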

For concise writing, the proof of both Lemma \ref{lemma:obs_abundant} and \ref{lemma:wo_obs_abundant} can be found in Appendices.

\section{Experiments}\label{sec:experiment}
We evaluated the proposed certifiable alignment method through simulation and real-world experiments. Simulations enabled ablation studies on motion patterns, redundant constraints, initial estimates, velocities and noise to assess observability and accuracy. Real-world tests in urban environments demonstrated robustness under realistic GNSS conditions. The method was compared with two Doppler-based baselines. The first baseline is a traditional velocity vector registration approach \cite{zhang_vboba_2020} using single point velocity (SPV) estimates \cite{groves_principles_2013} aligned via SVD:

\begin{equation}
\label{equ:voba}
\min_{\mathbf{R}^e_w\in \mathrm{SO(3)}}\sum_{i=1}^N||\mathbf{R}^e_w\mathbf{v}^w_{r,i}-\mathbf{v}^e_{r,i}||^2_\epsilon.
\end{equation}
The second baseline, adapted from \cite{cao_gvins_2022}, directly uses raw Doppler measurements. Originally for $\mathrm{SO(2)}$, we generalized it to $\mathrm{SO(3)}$ to handle full 3D rotations. It estimates initial attitude via Gauss-Newton optimization of Doppler residuals across multiple satellites and epochs.

\begin{table}[]
\centering
\caption{Simulation trajectory Settings}
\label{tab:sim_config}
\begin{tabular}{@{}cc@{}}
\toprule
Parameter         & Value       \\ \midrule
Interval          & 1 $\mathrm{s}$    \\
Duration          & 10 $\mathrm{s}$  \\
Signal Frequency  & 1575.42 $\mathrm{MHz}$ \\
Receiver velocity & 5 $\mathrm{m/s}$       \\
Orbit type        & Walker      \\
Clinical angle    & 55 $\mathrm{\degree}$         \\
Elevation angle   & $\ge$10 $\mathrm{\degree}$        \\ \bottomrule
\end{tabular}
\end{table}

\subsection{Simulation setup}

\begin{figure}
    \centering
    \includegraphics[width=0.7\linewidth]{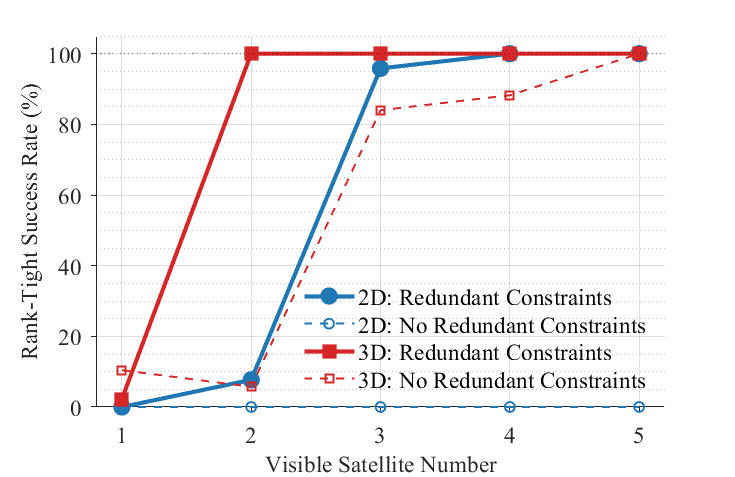}
    \caption{Optimality success rate in 2D/3D motion. The success is confirmed when the ratio of the smallest eigenvalue to the second-smallest eigenvalue of the coefficient matrix $\mathbf{H}$ is smaller than $10^{-6}$. Note that ``false positive'' instances may occur in severely degraded environments (e.g., satellite number $=1$ in the 3D: No Redundant Constraints case), where the rank-tightness condition is numerically satisfied but the resulting estimate remains suboptimal due to insufficient constraints.}
    \label{fig:constraints_line_plot}
\end{figure}

To empirically evaluate the observability lemmas in Section \ref{sec:duality_gap}, we design 2D planar and 3D synthetic trajectories to simulate ground and aerial robotic maneuvers. The parameters in TABLE \ref{tab:sim_config} are selected to reflect standard Global Positioning System (GPS) Medium Earth Orbit (MEO) conditions, using a Walker constellation at 26,560 km altitude. To ensure a realistic signal environment, an elevation mask angle of 10$\degree$ is applied to exclude multipath-prone low-altitude signals. The 10 s observation duration was chosen to provide sufficient local velocity excitation and a diversity of satellite-to-receiver line-of-sight (LOS) vectors, which is critical for verifying the global optimality and observability of the system under different redundant constraints. Specifically, the 2D motion follows a circular path with a 5 m radius, while the 3D motion consists of a constant velocity profile along the local x-axis for 6.67 s followed by a vertical ascent along the z-axis for 3.33 s. Unless otherwise specified, the mean velocity for all trajectories is maintained at 5 m/s.
  
To mitigate the effects of statistical randomness, each test was conducted 200 times within a Monte Carlo framework to ensure the reliability and reproducibility of the results. Using the resulting simulation data, we performed a systematic analysis across key factors influencing observability and alignment performance, including redundant constraints, satellite visibility, motion modes, initial state guesses, and Doppler measurement noise levels. In Sections \ref{sec: redundant_test} and \ref{sec: initial_test}, we utilize noiseless measurements to evaluate the optimality of compared methods and the impact initial receiver positions. In Section \ref{sec: noise_test}, we also assess the robustness of the proposed alignment method and baseline approaches under mild perturbations in the Doppler measurements.

\subsection{Redundant Constraints and Observability} \label{sec: redundant_test}
We utilize noiseless data to evaluate Lemmas \ref{lemma:obs_abundant} and \ref{lemma:wo_obs_abundant}, comparing the rank-tightness success rate across 3D and 2D trajectories. As shown in Fig. \ref{fig:constraints_line_plot}, the proposed method with redundant constraints achieves 100$\%$ success with only two satellites in 3D motion and four satellites in 2D motion, demonstrating significantly enhanced observability. In contrast, without redundant constraints, 2D motion fails to achieve any valid optimality, while 3D motion requires at least five satellites to achieve 100\% success. Notably, for the one visible satellite case in 3D motion without redundant constraints, the plot shows a non-zero success rate. However, since orientation is theoretically unobservable with a single satellite, this is identified as a numerical ``false positive'' where the rank-tight condition is accidentally satisfied without reaching the true global optimum. These results confirm that while 3D dynamics improve reliability, redundant constraints are the critical factor for ensuring genuine global optimality in GNSS-degraded or planar motion environments. Consequently, they are employed by default in our framework to prevent unidentifiable rotation variables \cite{dumbgen_globally_2024}.

\begin{figure}
    \centering
    \includegraphics[width=0.8\linewidth]{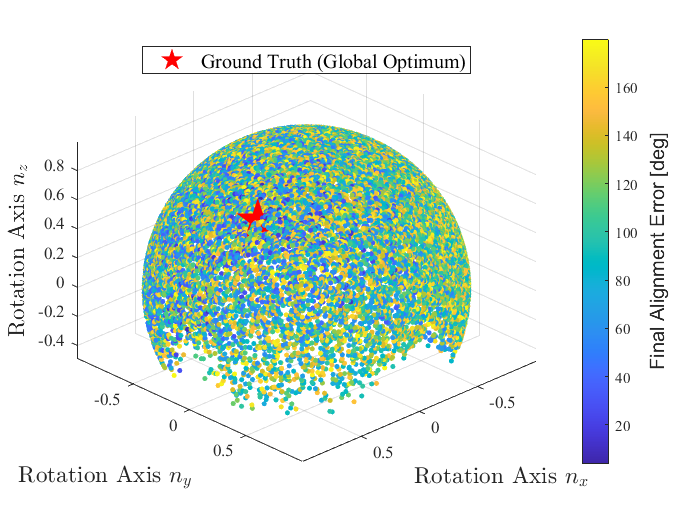}
    \caption{Orientation sensitivity heatmap of the local solver generalized from GVINS, under degenerate satellite visibility (average satellite number $=2$). The axes ($n_x, n_y, n_z$) represent the normalized components of the equivalent rotation axis of the initial orientation error, while the color scale indicates the final alignment error in degrees after convergence. The red star denotes the global optimum (ground truth).
}
    \label{fig:initial_guess}
\end{figure}

 \begin{figure}
        \centering
        \includegraphics[width=1\linewidth]{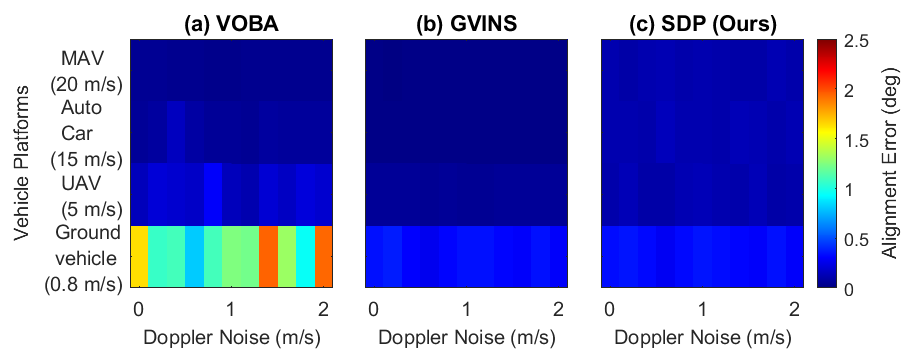}
        \caption{Doppler noise disturbance test with 2D trajectories on four different platforms. Doppler noise level denotes the standard deviation of the Gaussian white noise.}
        \label{fig:3d_noise}
    \end{figure}

\subsection{Effect of initialization} \label{sec: initial_test}
  
In this section, we investigate the effect of the initial state estimate on alignment performance. It is worth noting that both the traditional VOBA method and our SDP-based alignment method do not require an initial guess. Therefore, we focus on the advanced alignment method adapted from GVINS, evaluating it under a degenerate condition with an average of only two visible satellites per epoch and randomly initialized states sampled from the $\mathrm{SO(3)}$ manifold.The results of this evaluation are illustrated in Fig. \ref{fig:initial_guess}, where the red star represents the ground truth, and the heatmap projected over a unit spherical surface depicts the final alignment error in degrees across different initializations. To visualize the rotation space, each point $(n_x, n_y, n_z)$ on the sphere represents the unit direction of the equivalent rotation axis of the initial orientation guess. Two key observations emerge from these results. First, when the geometric distribution of satellites is poor (e.g. with less than four visible satellites), a high-quality initial guess becomes critical for achieving accurate alignment. Second, even initial points located near the ground truth can produce large alignment errors, highlighting the inherent difficulty in identifying feasible initial states under degenerate satellite configurations. These findings underscore the importance of initialization methods that are less sensitive to the initial guess, particularly in scenarios with sparse or poorly distributed satellite visibility.

\subsection{Alignment errors with noise disturbances} \label{sec: noise_test}

In this section, we evaluate the performance of the GVINS alignment method when provided with ground truth as the initial state to establish a performance upper bound. To investigate the robustness of each method under measurement noises, a comparative study was conducted across ten Doppler shift Gaussian white noise levels with the standard deviation from 0 to 2 $m/s$. Furthermore, we analyzed how vehicle dynamics influence alignment by testing four representative platforms (e.g., ground robot, unmanned aerial vehicle (UAV), autonomous car, and micro aerial vehicle (MAV)) at their typical operational velocities. The results for 2D trajectories are illustrated in Fig. \ref{fig:3d_noise}, respectively. The results indicate that under various noise conditions and with exact initialization, the GVINS-based local solver and the proposed SDP method exhibit nearly identical performance, with mean errors below $1^\circ$. Notably, the local solver occasionally achieves slightly higher precision due to its iterative refinement nature, whereas the SDP method's precision is sometimes limited by the numerical tolerances of the interior-point solver (e.g., SeDuMi). A key observation across all methods is that higher vehicle velocity significantly enhances geometric observability, thereby providing greater resistance to increased Doppler measurement noise. However, the traditional VOBA method remains highly sensitive to noises in low-dynamic scenarios. For instance, at a speed of 0.8 m/s, its estimation error frequently exceeds 1°, showing a faster performance degradation compared to optimization-based methods. In contrast, while the GVINS method maintains stable performance given an perfect initialization, the SDP method's optimality comes with a unique trade-off. As illustrated in Fig. \ref{fig:constraints_line_plot}, in degraded GNSS environments where the number of visible satellites is low (e.g., visible satellite number $< 4$), the rank-tightness success rate drops significantly. In these cases, the SDP framework is more ``cautious''. If the global optimality certificate is not obtained, it may withhold a solution rather than risk an incorrect one. While this ensures that any output solution is globally optimal, it also means the success rate under extreme degradation remains lower than that of initialized local methods. Improving the success rate of SDP-based alignment in such challenging conditions remains a promising direction for future research.

\begin{figure}
    \centering
    \includegraphics[width=0.7\linewidth]{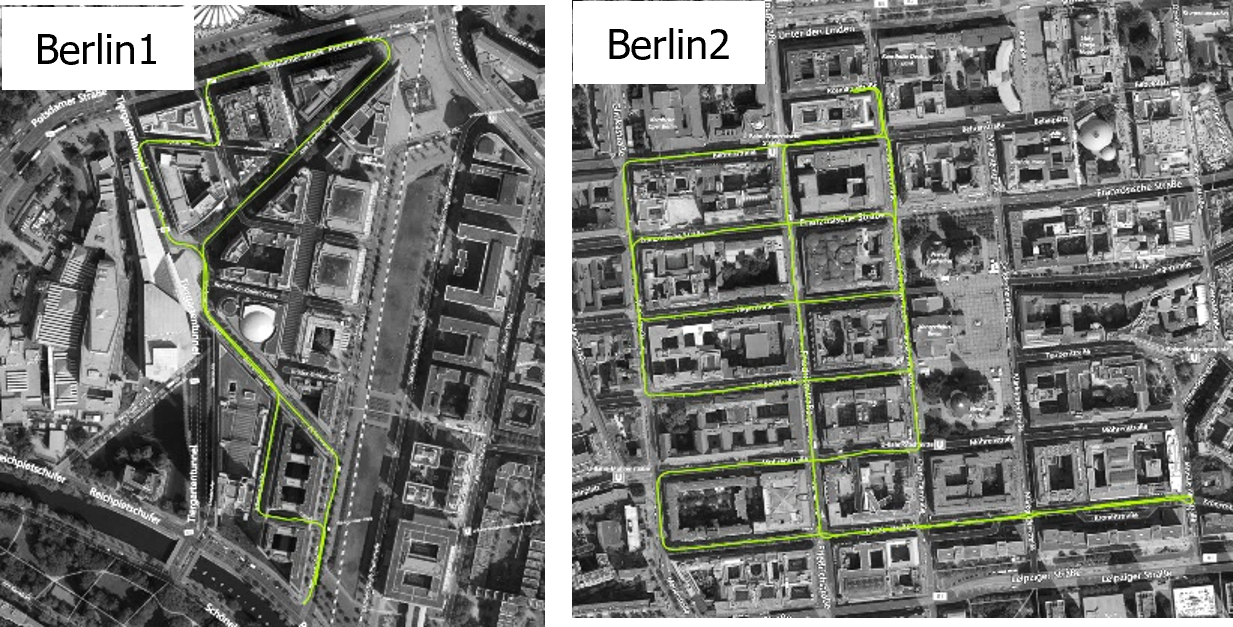}
    \caption{Dataset scenes and vehicle trajectory (green lines) in the real tests}
    \label{fig:real_scene}
\end{figure}

\begin{table}[]
\centering
\caption{hardware settings of the real tests}
\label{tab:real_dataset}
\begin{tabular}{@{}ccc@{}}
\toprule
                & Hardware        & Features                \\ \midrule
Ground-truth    & NovAtel   SPAN  & Yaw   only              \\
GNSS   receiver & U-blox   M8T    & L1                      \\
GNSS   Antenna  & Patch   antenna & GPS,   GLONASS  measurements (5 $\mathrm{Hz}$)    \\
odometer        & CAN             & velocity   and yaw-rate (50 $\mathrm{Hz}$) \\ \bottomrule
\end{tabular}
\end{table}

\subsection{Real test}
As shown in TABLE \ref{tab:real_dataset}, we evaluated the proposed alignment method on the smartLoc dataset\cite{reisdorf_problem_2016}, which provides NovAtel SPAN ground truth and vehicle ego-motion (velocity and yaw rate) from the CAN bus. Low-cost GNSS receivers supply GPS/GLONASS pseudorange and Doppler at L1. As shown in Fig. \ref{fig:real_scene}, two trajectories including Berlin 1 (with average velocity magnitude of 5.84 $m/s$) and Berlin 2 (with average velocity magnitude of 7.08 $m/s$). Since ground truth only includes heading, we focus on alignment error in this dimension, using a 120-second sliding window for statistics. GVINS is initialized with the identity matrix, and mean processing times per epoch are 0.02 s, 0.02 s, and 0.25 s for VOBA, GVINS, and SDP, respectively. Since the proposed methods is primarily designed for one-time system initialization or re-localization rather than continuous tracking, its processing latency of 0.25 s is well within the acceptable range for practical robotics pipelines.

The results, summarized in TABLE \ref{tab:real_result}, include mean absolute errors (MAE), standard deviation of the errors (STD), and maximum errors (MAX). To analyze the effect of sampling frequency, we also downsampled the measurements by a factor of 10 and remarked the tests as Berlin1-10 and Berlin2-10. To demonstrate the influence of satellite visibility, we conducted both \emph{GNSS-healthy} and \emph{GNSS-degradation} tests. In the GNSS-healthy test, we use four satellites per epoch. In the GNSS-degradation test, we restricted the average number of visible satellites to 2 to evaluate the robustness of different alignment methods under limited GNSS conditions. The traditional VOBA method is feasible when sufficient ($\ge4$) satellites are employed but fails when fewer than four satellites are visible. Compared to VOBA, the GVINS-based method remains feasible under degraded GNSS conditions but exhibits strong sensitivity to the quality of the initial estimate, especially for degraded cases. Notably, when the GVINS-based local solver is initialized with the ground truth, the resulting alignment error converges to the same magnitude as our SDP-based method (with a negligible difference of $10^{-4}\degree$). This observation suggests that the performance gap between the two methods is not due to local refinement capabilities, but rather the local solver's inability to identify the global optimum under poor initialization. Moreover, downsampling has not influenced the accuracy of VOBA but sometimes affects GVINS. For example, GVINS becomes trapped in local optima in the Berlin2 test but not in the Berlin2-10 test. In comparison, our SDP alignment method remains robust under low satellite visibility and can provide optimality guarantees after estimation. These results suggest that successful alignment in GNSS-degraded environments, such as under bridges, is primarily governed by the geometric observability and the vehicle's motion-induced change in line-of-sight vectors rather than simply increasing the observation amount or satellite count. Even in the Berlin1-10 and Berlin2-10 tests with sparse 2-satellite constellations, the SDP method consistently identifies the global optimum by accumulating geometric information along the trajectory, proving that a robust global solver is more critical for reliable initialization than high-density temporal measurements. This globally optimal coarse estimate ensures that the initial state resides within the convergence basin of subsequent local filters or sliding-window frameworks, providing a reliable ``cold-start'' capability without requiring any prior information.

\begin{table}[t]
\centering
\caption{Alignment Error in Real Tests}
\label{tab:real_result}
\footnotesize 
\setlength{\tabcolsep}{2.5pt} 
\begin{tabular}{@{}lcccccccccc@{}}
\toprule
 & & \multicolumn{3}{c}{VOBA} & \multicolumn{3}{c}{GVINS} & \multicolumn{3}{c}{SDP (Ours)} \\ \cmidrule(lr){3-5} \cmidrule(lr){6-8} \cmidrule(lr){9-11} 
Yaw Error & Sat. & MAE & STD & MAX & MAE & STD & MAX & MAE & STD & MAX \\ \midrule
Berlin1    & 4 & 5.76 & \textbf{2.27} & 11.98 & 4.72 & 2.59 & 10.84 & \textbf{4.69} & 2.57 & \textbf{9.42} \\
           & 2 & -    & -    & -     & 6.05 & 22.98 & 172.25 & \textbf{2.64} & \textbf{2.06} & \textbf{7.25} \\ \midrule
Berlin1-10 & 4 & 5.76 & \textbf{2.27} & 11.98 & 5.16 & 3.66 & 14.57 & \textbf{5.10} & 3.63 & \textbf{14.33} \\
           & 2 & -    & -    & -     & 9.33 & 25.02 & 193.91 & \textbf{4.53} & \textbf{3.20} & \textbf{18.55} \\ \midrule
Berlin2    & 4 & 16.19 & 5.81 & 23.75 & 1.46 & 4.38 & 106.57 & \textbf{1.29} & \textbf{0.87} & \textbf{3.44} \\
           & 2 & -    & -    & -     & 38.90 & 63.06 & 168.70 & \textbf{1.60} & \textbf{1.14} & \textbf{4.39} \\ \midrule
Berlin2-10 & 4 & 16.19 & 5.81 & 23.75 & \textbf{1.66} & \textbf{1.31} & \textbf{7.79}  & \textbf{1.66} & \textbf{1.31} & \textbf{7.79} \\
           & 2 & -    & -    & -     & 33.90 & 61.09 & 179.17 & \textbf{2.26 }& \textbf{1.77} & \textbf{11.98} \\ \bottomrule
\end{tabular}
\end{table}
\section{Conclusion}
\label{sec:conclusion}
In this work, we introduced a certifiable GNSS/local frame alignment method based on Doppler measurements and convex relaxation. By reformulating the alignment task as a nonconvex QCQP and relaxing it into an SDP problem, we established conditions under which certifiable optimality can be guaranteed through relaxation tightness and observability analysis. Both simulations and real-world experiments, including challenging scenarios with as few as two satellites including Doppler measurements per epoch and 2D motion, demonstrate that our method consistently delivers certifiably optimal solutions where conventional VOBA and GVINS approaches may fail or converge to local optima. To facilitate further research and practical adoption of GNSS-based alignment in robotics, we have open-sourced all of our code and data.

This work shows that, with redundant constraints, an SDP solver is robust to low observability scenarios. Interestingly, these factors are not standalone but affect each other. In future work, we propose to further study the performance of our SDP-based method in degenerated scenarios, e.g. with linear motion or only one observable satellite with Doppler measurements.


{
\appendices
\section*{Proof of the Lemma \ref{lemma:obs_abundant}}

Given a bundle of Doppler measurements and ephemeris, we define $M_{1:K}=[\mathbf{v}_i \otimes \mathbf{n}_i]_{i=1}^K =\mathbf{V}\odot \mathbf{N}$. Since $\text{rank}(\mathbf{M})\leq\text{rank}(\mathbf{V})\cdot\text{rank}(\mathbf{N})$, and the Doppler-based alignment problem involves three unknown rotation Euler angles, we require $\text{rank}(\mathbf{M})\ge3$. Given that $0\le \text{rank}(\mathbf{V})\le 3$ and $0\le \text{rank}(\mathbf{N})\le 3$, the minimum requirement for observability is $\text{rank}(\mathbf{V})\ge 2$ and $\text{rank}(\mathbf{N})\ge 2$. If the rank of either matrix is less than 2, the observability requirement is violated. This completes the proof.

\section*{Proof of the Lemma \ref{lemma:wo_obs_abundant}}
Similar to Lemma \ref{lemma:obs_abundant}, the Doppler-based alignment problem in this context involves ten unknown variables (nine rotation matrix elements and one clock drift rate). Considering the known homogeneous variable, the system requires $\text{rank}(\mathbf{M})\ge9$. Consequently, the minimum integer rank for both $\mathbf{V}$ and $\mathbf{N}$ must be full rank, i.e., $\text{rank}(\mathbf{V})=\text{rank}(\mathbf{N})= 3$. This completes the proof.

}

\bibliography{ral}
\bibliographystyle{IEEEtran}

\vfill

\end{document}